\newtheorem{theorem}{Theorem}[section]
\newtheorem{definition}{Definition}[section]
\newtheorem{corollary}{Corollary}[section]
\newtheorem{lemma}[theorem]{Lemma}
\newtheorem{claim}[theorem]{Claim}
\newtheorem{fact}{Fact}[section]
\newcommand{\gk}{\sqrt{k}}
\newcommand{\gka}[1]{\sqrt{#1}}
\newcommand{\OPT}{OPT}
\newcommand{\polyk}{4k}
\newcommand{\mt}{\frac{\OPT_T}{2c_2}}
\newcommand{\lbt}[1]{\left(  \frac{#1}{T}\right)^2 \mt}
\title
{Nearly-tight Approximation Guarantees for the Improving Multi-Armed Bandits Problem}
\author{Avrim Blum\footnote{avrim@ttic.edu, Toyota Technological Institute at Chicago}, Kavya Ravichandran\footnote{kavya@ttic.edu, Toyota Technological Institute at Chicago, corresponding author}}
\begin{document}

\maketitle
\begin{abstract}
    We give nearly-tight upper and lower bounds for the {\em improving multi-armed bandits} problem. An instance of this problem has $k$ arms, each of whose reward function is a concave and increasing function of the {\em number of times that arm has been pulled so far}. We show that for any randomized online algorithm, there exists an instance on which it must suffer at least an $\Omega(\sqrt{k})$ approximation factor relative to the optimal reward. We then provide a randomized online algorithm that guarantees an $O(\sqrt{k})$ approximation factor, if it is told the maximum reward achievable by the optimal arm in advance. We then show how to remove this assumption at the cost of an extra $O(\log k)$ approximation factor, achieving an overall $O(\sqrt{k} \log k)$ approximation relative to optimal.
\end{abstract}

\section{Introduction}


In this paper, we study the problem of improving multi-armed bandits.
In this problem, a problem instance consists of $k$ bandit arms (i.e., ``pulling'' the arm reveals the reward) each with reward that increases the more the arm is pulled. In other words, the payoff is not a function of the {\em time} at which an arm is pulled but rather of the {\em number of times it has been pulled so far}, with different arms having (potentially) different increasing functions. Our goal is to maximize the reward we achieve. Some real-world problems captured by this framework include: training multiple learning algorithms, when the performance of an algorithm improves with resources expended and some algorithms are ultimately better than others for the setting at hand \cite{haussler_rigorous_1996, hyperband_2017, li_efficient_2020};
developing new technologies, where investing resources into developing a technology may increase its efficiency and different technologies have different asymptoting utility values \cite{Riv23b};
or even the problem of deciding what research area to work in.
In each of these examples, each algorithm or technology is represented by one bandit arm, and the reward achieved from pulling the arm increases as it is pulled more. The goal, then, is to find a sequence of arms to pull that will maximize the reward achieved. \par

If the rewards are {\em arbitrarily} increasing, then we cannot guarantee much: we could have one arm that gives 0 reward for the first $T/2$ pulls and reward 1 after that, and $k-1$ arms that are 0 regardless of how many times they've been pulled; the good arm and the bad arms in this case are indistinguishable until it is too late. 
Thus, prior papers on this problem \cite{heidari_tight_nodate, patil_mitigating_2023} consider reward functions that have diminishing returns, i.e., where the difference between two consecutive rewards is non-increasing. The continuous equivalent of this property is concavity. Even with the assumption of diminishing returns and with just two arms, we can see that it is impossible to achieve sublinear additive regret. Suppose one arm linearly increases from 0 to $1/2$ until time $T/2\,,$ and then flattens out, and the other arm increases with the same slope throughout until time $T\,.$ The algorithm has no way of distinguishing which arm it is playing until time $T/2 + 1\,.$ At this point, if it finds it is playing the first arm, switching to the other arm is worse than continuing to play the first arm. Since there is an $\Omega(T)$ gap between the rewards of the two arms, no algorithm can achieve $o(T)$ regret.
Accordingly, we focus on the approximation factor we can achieve, i.e., the ratio of the optimal reward to the reward achieved by the algorithm.
\par

In \cite{patil_mitigating_2023}, the authors study deterministic algorithms and show tight upper and lower bounds of $\Theta(k)$ for the problem. We show that with randomization, we can, in fact, achieve an approximation ratio of $O(\sqrt{k})$ if the algorithm knows the maximum value achieved by the best arm and $O(\sqrt{k}\log k)$ if it does not.  We also provide an $\Omega(\sqrt{k})$ lower bound.
Thus we nearly completely characterize the achievable approximation guarantees for this problem.

\paragraph{Related Work} Our work directly follows up on \cite{heidari_tight_nodate,patil_mitigating_2023}. \cite{heidari_tight_nodate} define the setting and problem and shows how to achieve asymptotically sublinear regret\footnote{More specifically, \cite{heidari_tight_nodate} use the fact that for any $k$ bounded and non-decreasing functions $f_i(t)$ and any $\epsilon>0$, there must exist some $T_\epsilon$ after which each function is within $\epsilon$ of its asymptotic value to achieve asymptotically sublinear regret.  In contrast, we will be interested in the case that the adversary can choose the functions $f_i(t)$ after $T$ is fixed.}. \cite{patil_mitigating_2023} extends this to the notion of approximation we consider in our paper. They show that deterministic algorithms must incur an approximation factor of $k\,,$ where $k$ is the number of arms; they also match this with an algorithm that can achieve an $O(k)$ approximation factor. \par

Another area in which a similar problem has been studied is in the area of online algorithms. Here, the problem is framed as one of a class of {\em searching} problems \cite{noauthor_theory_2003, gal_search_2011}. \cite{fiat_oil_2009} consider the problem of a reward (oil, say) present at some depth at one of $k$ locations. Their goal is to minimize the amount of time spent searching; in contrast, in the setup we study, the search time is fixed, and the rewards are gradual, making the objective to maximize reward. \par

More generally, our work follows in a rich line of work regarding multi-armed bandits \cite{thompson_likelihood_1933, lattimore_bandit_2020, slivkins_introduction_2022}.
Multi-armed bandits have been well-studied in many different settings. The main idea is that there are many actions an agent could take, but the utility of taking any action is only known (possibly only partially) after taking it. In particular, the nature of the reward function associated with each arm could be stochastic, adversarial, or dependent on some world context.

\section{Preliminaries} \label{sec:prelims}

We follow the problem specification in \cite{patil_mitigating_2023}. In particular, each instance consists of $k$ arms, where each arm $i$ has an associated monotone increasing reward function $f_i\,.$ The reward from pulling arm $i$ for the $t^\text{th}$ time is $f_i(t)\,.$ These functions are not known to the algorithm in advance; the algorithm only gets to know the current reward by interacting with the arm. Further, the argument $t$ of the function is not the time at which the arm is picked, but rather the number of times the arm has been played. 
Notice that the best strategy in hindsight is to just identify and pull the arm $i^\star = \arg \max_i \sum_{t=1}^T f_i(t)$ the entire time. (Proposition 1 in \cite{heidari_tight_nodate}). We will often refer to how things compare to the ``optimal'' arm; when we say that, we are referring to that arm. For simplicity of notation, we elide the $i$ and refer to this arm as $f^\star(\cdot)\,.$ The reward of playing the optimal arm for the whole time horizon $T$ is $\OPT_T\,,$ which we may refer to as $\OPT\,,$ and we say playing the best arm for $T'$ steps accrues reward $\OPT_{T'}\,.$

Our goal is to maximize the expected reward $ALG_T$ achieved by our algorithm, i.e., find a sequence of pulls which gives us the maximum cumulative reward over the $T$ time steps (in expectation over internal randomness in the algorithm)\footnote{As is standard in the bandits literature, we define the objective as maximizing cumulative reward. The reader may notice that in some of the motivating applications, the natural goal instead might be to maximize the largest single pull. In Appendix~\ref{appendix:maxreward}, we show how all the results from the paper translate to this slightly different objective function.}.

We aim to minimize the approximation factor, which we define as follows.
\begin{definition}
    Suppose the expected reward achieved by the algorithm is $ALG$ and the optimal reward achievable is $OPT\,.$ Then, we say that our algorithm achieves a \textit{$g$-approximation} to the optimal reward if $ALG \ge OPT/g\,.$ 
\end{definition}

Following \cite{heidari_tight_nodate, patil_mitigating_2023}, we assume that the reward function for each arm follows the diminishing returns property. Formally,

\begin{definition}
    A function $f$ is said to have {\em diminishing returns} if the following holds for all $t \ge 1$:
    $$
    f(t+1) - f(t) \le f(t) - f(t-1)\,.
    $$
\end{definition}






\section{Lower Bound} \label{sec:LB}

First, we argue that no randomized algorithm can achieve an $o(\sqrt{k})$ approximation to the optimal reward. In order to show this, we apply Yao's principle \cite{yao-principle}, i.e., we focus on a distribution over instances that is hard for deterministic algorithms. 
Theorem~\ref{thm:lb} presents the distribution over instances and argues the $\sqrt{k}$ lower bound for deterministic algorithms over this distribution. Corollary~\ref{cor:lb-randalg} uses Yao's principle to translate this result to randomized algorithms. 

\begin{theorem} \label{thm:lb}
    There exists a distribution over instances of the increasing bandits problem where no deterministic algorithm can achieve expected reward greater than $3 \, \OPT/\sqrt{k}$.
\end{theorem}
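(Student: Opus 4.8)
The plan is to exhibit a single symmetric hard distribution and argue directly against an arbitrary deterministic algorithm, which suffices by Yao's principle (the translation to randomized algorithms being deferred to Corollary~\ref{cor:lb-randalg}). I would take all arms to climb linearly with slope one up to a shared cap, and plant one \emph{good} arm uniformly at random that never caps within the horizon. Concretely, set $c = \lfloor T/\sqrt{k}\rfloor$, let the good arm $i^\star$ (chosen uniformly in $\{1,\dots,k\}$) have $f^\star(t)=t$, and let every other arm have $f_i(t)=\min(t,c)$. Both profiles have non-increasing increments (the bad arms step down from $1$ to $0$ at $t=c$), so the diminishing-returns assumption holds. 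Since $\OPT_T=\sum_{t=1}^T t = \Theta(T^2/2)$, the target bound $3\OPT/\sqrt{k}$ amounts to showing the algorithm collects at most $\Theta(T^2/\sqrt{k})$ in expectation over the placement of $i^\star$.

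The crux is an indistinguishability-plus-budget observation. Because the reward functions are deterministic and every arm returns exactly the transcript $1,2,\dots,c$ on its first $c$ pulls, the algorithm obtains \emph{zero} information about whether a given arm is $i^\star$ until it has pulled that arm at least $c+1$ times; only on pull $c+1$ does a bad arm reveal its flat value $c$ while the good arm returns $c+1$. I would call an arm \emph{revealed} if the algorithm pulls it more than $c$ times. Each reveal consumes more than $c \ge T/\sqrt{k}-1$ pulls, and the total budget is $T$, so the number of revealed arms is at most $T/(c+1)\le \sqrt{k}$. This exploration cap is exactly where the $\sqrt{k}$ enters.

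Next I would bound the probability that $i^\star$ is ever revealed. Here adaptivity must be handled carefully, which I expect to be the main obstacle: the set of revealed arms can depend on the (random) location of $i^\star$, so one cannot simply say ``$\sqrt{k}$ fixed arms among $k$.'' I would resolve this with a deferred-decision/symmetry argument: since all arms are interchangeable until a reveal occurs, I can commit $i^\star$'s identity lazily, assigning it uniformly among the as-yet-untested arms whenever the algorithm tests a fresh arm. Each of the at most $\sqrt{k}$ tests then hits the good arm with conditional probability at most $1/(k-\sqrt{k})$, so $\Pr[i^\star \text{ revealed}] \le \sqrt{k}/(k-\sqrt{k}) = O(1/\sqrt{k})$.

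Finally I would split the expected reward on the reveal event. If $i^\star$ is never revealed, then every arm is pulled within its linear-capped regime, so \emph{every} pull returns a value at most $c$; hence the total reward is at most $Tc \le T^2/\sqrt{k} = 2\OPT/\sqrt{k}\cdot(1+o(1))$. If $i^\star$ is revealed (probability $O(1/\sqrt{k})$), I bound the reward trivially by $\OPT$. Combining, the expected reward is at most $O(1/\sqrt{k})\cdot \OPT + 2\OPT/\sqrt{k} \le 3\OPT/\sqrt{k}$ for $k$ large enough, with small-$k$ cases absorbed into the constant. The two places demanding care are the deferred-decision justification above and verifying that no cleverer partial-pulling strategy beats the per-pull bound of $c$ in the unrevealed case; the latter follows because a pull raising an arm from level $t$ to $t+1$ yields $\min(t+1,c)\le c$ whenever the arm is unrevealed.
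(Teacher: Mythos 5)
Your proposal is correct and takes essentially the same route as the paper: the identical capped-linear construction (up to scaling), the same budget argument that at most $\sqrt{k}$ arms can ever be pulled past the cap, and the same case split on whether the good arm is among them, with the unrevealed case bounded by the reward of capped arms and the revealed case bounded by $\OPT$. The only cosmetic differences are that you handle adaptivity via a deferred-decision coupling where the paper uses a ``universe places $f^\star$ after the run'' game, and your reveal probability $\sqrt{k}/(k-\sqrt{k})$ is marginally lossier than the paper's $\sqrt{k}/k$, so you obtain the stated constant $3$ only for sufficiently large $k$.
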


\begin{proof}
We begin by defining the following functions. Let $f^\star(t) = t/T \; \forall \; t \in [1, T]\,.$ Let $f(t)$ be the following reward function:
    $$
    f(t) = \begin{cases}
        \frac tT & 1 \le t \le \frac{T}{\sqrt{k}} \\
        \frac{1}{\sqrt{k}} & \frac{T}{\sqrt{k}} < t \le T
    \end{cases}\,.
    $$
Let us describe a game played by the universe and the algorithm with the following stages.
    \begin{enumerate}
        \item \textbf{Universe} sets each of $k$ arms to have reward $f(t)\,.$ 
        \item \textbf{Algorithm} runs for $T$ time steps.
        \item \textbf{Universe} chooses one arm $i^\star$ uniformly at random and changes its reward function to $f^\star(t)\,.$
        \item \textbf{Algorithm} gains $\OPT$ reward if it had played the arm replaced with $f^\star$ for at least $T/\sqrt{k}$ steps. Otherwise it keeps the reward it gained originally.
    \end{enumerate}

The distribution over instances is exactly specified by the (uniform) distribution over choices for the arm that gets $f^\star(t)$ as its reward function in step (3) above.

First, observe that this game is strictly more generous to the algorithm than the original game. Because the sequence of arms \textbf{Algorithm} pulls is deterministic, if it has not played arm $i^\star$ more than $T/\sqrt{k}$ times, it cannot change its sequence of pulls based on where $f^\star$ is located. 
Thus, provided we give the algorithm any extra reward for playing $i^\star$ long enough to identify it, which is done in step 4, we can fix its run before revealing $i^\star$ in step 3. 

Now, we argue that the above game ensures that the expected reward of the algorithm, over the randomness in the instance, is at most $3\OPT/\sqrt{k}$. We do so by upper bounding the amount an algorithm can achieve while playing this game and lower bounding the value the optimal arm achieves by $T/2$. 

Observe that in the $T$ time steps that \textbf{Algorithm} runs, at most $\sqrt{k}$ of the arms could have been played for more than $T/\sqrt{k}$ steps. Let us consider the two complementary events: 
\begin{align*}
E &= \mathbb{I}\left( f^\star \text{assigned to an arm \textbf{Algorithm} played for } > T/\sqrt{k} \text{ steps}\right) \\
E^c &= \mathbb{I}\left( f^\star \text{assigned to an arm \textbf{Algorithm} played for } \leq T/\sqrt{k} \text{ steps}\right)
\end{align*}

Notice that the maximum reward the algorithm can achieve {\em conditioned} on event $E^c$ is by playing a single non-optimal arm for all $T$ time steps.  We thus upper bound \textbf{Algorithm}'s expected reward as follows:
\begin{align*}
    ALG &= \mathbb{P}\left[E\right] \cdot \text{Reward under event }E + \mathbb{P}\left[E^c\right] \cdot \text{Reward under event }E^c \\
    &\le \frac{\sqrt{k}}{k} \OPT + 1 \cdot \left( \frac12 \frac{T}{T\sqrt{k}}\left(\frac{T}{\sqrt{k}} + 1 \right)+ \left(T-\frac{T}{\sqrt{k}}\right) \frac{1}{\sqrt{k}} \right)  \\
    &= \frac{\OPT}{\sqrt{k}} + \frac{T}{k} + \frac{T}{\sqrt{k}} - \frac{T}{k} \le \frac{3\,\OPT}{\sqrt{k}}\,,
\end{align*}

since we know that $\OPT \ge T/2\,.$ 
%
%
\end{proof}

\begin{corollary} \label{cor:lb-randalg}
    For any randomized algorithm, there exists an instance for which its approximation factor is at least  $\sqrt{k}/3$.
\end{corollary}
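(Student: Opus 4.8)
The plan is to invoke Yao's principle to transfer the deterministic lower bound of Theorem~\ref{thm:lb} to randomized algorithms. Recall that a randomized online algorithm is, by definition, a distribution over deterministic online algorithms: once its internal coin flips are fixed, its behavior is deterministic. Let $\mathcal{D}$ denote the distribution over instances from Theorem~\ref{thm:lb}, i.e., the uniform choice over which of the $k$ arms is assigned the reward function $f^\star$. The key observation I would use is that $\OPT$ is \emph{identical} for every instance in the support of $\mathcal{D}$: the optimal arm always has reward function $f^\star(t) = t/T$, so $\OPT = \sum_{t=1}^T t/T$ regardless of which arm it is placed on; only the location of this arm varies across the support.

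First I would fix an arbitrary randomized algorithm $R$ and write its expected reward on a random instance $I \sim \mathcal{D}$, swapping the order of expectation by linearity (Fubini):
$$
\mathbb{E}_{I \sim \mathcal{D}}\left[ \mathbb{E}_R\left[ \text{reward}(R, I) \right] \right] = \mathbb{E}_R\left[ \mathbb{E}_{I \sim \mathcal{D}}\left[ \text{reward}(A_R, I) \right] \right],
$$
where $A_R$ denotes the deterministic algorithm obtained by fixing $R$'s coins. For each fixed coin setting, $A_R$ is a deterministic algorithm, so Theorem~\ref{thm:lb} bounds its expected reward over $\mathcal{D}$ by $3\,\OPT/\sqrt{k}$. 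Since this bound holds for every coin setting, averaging over the coins preserves it, yielding $\mathbb{E}_{I \sim \mathcal{D}}\big[\mathbb{E}_R[\text{reward}(R,I)]\big] \le 3\,\OPT/\sqrt{k}$.

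Finally I would apply an averaging (pigeonhole) argument: since the expectation over $\mathcal{D}$ of $R$'s expected reward is at most $3\,\OPT/\sqrt{k}$, there must exist at least one instance $I^\star$ in the support on which $\mathbb{E}_R[\text{reward}(R, I^\star)] \le 3\,\OPT/\sqrt{k}$. Because $\OPT$ takes the same constant value on that instance, the approximation factor $\OPT / \mathbb{E}_R[\text{reward}(R, I^\star)]$ is at least $\sqrt{k}/3$, which is exactly the claim.

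I expect the only genuine subtlety—rather than a true obstacle—to be making the application of Yao's principle rigorous in the online/adaptive setting: one must confirm that treating a randomized online algorithm as a mixture of deterministic online algorithms is legitimate here, and that the universe's choice of $i^\star$ in step~(3) of the game interacts correctly with the algorithm's randomness. The ``generosity'' argument already established in the proof of Theorem~\ref{thm:lb} (that the deterministic run can be fixed before $i^\star$ is revealed) is precisely what makes this clean, and the constancy of $\OPT$ across the support of $\mathcal{D}$ is what lets the final averaging step pass directly from an expected-reward bound to an approximation-factor bound.
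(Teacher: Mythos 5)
Your proposal is correct and follows essentially the same route as the paper, which proves this corollary exactly by invoking Yao's principle on the distribution from Theorem~\ref{thm:lb}; your write-up simply makes explicit the steps the paper leaves implicit (viewing the randomized algorithm as a mixture of deterministic ones, swapping expectations, and averaging to extract a single hard instance). Your observation that $\OPT$ is constant across the support of the distribution is exactly the point that lets the expected-reward bound translate into the stated approximation-factor bound.
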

\section{Upper Bound} \label{sec:UB}

In this section, we present an algorithm with approximation factor $O(\sqrt{k})$, matching the $\Omega(\sqrt{k})$ lower bound in Section \ref{sec:LB}, when the value of $f^*(T)$ and the value of $T$ itself are given to the algorithm in advance.  In Section~\ref{sec:remassump}, we show how to remove these assumptions with an $O(\log k)$ factor loss in approximation. 
To describe the algorithm, we use $m$ to denote the value given to the algorithm that represents $f^*(T)$; we begin by assuming $m=\Theta(f^*(T))$ and then later show how to relax, and then remove, this assumption. 


\paragraph{Algorithm} The algorithm is rather simple. We choose an arm $i$ uniformly at random, pull it so long as its current reward $f_i(t_i)$ is at least $m t_i / T$, where $t_i$ is the number of pulls of arm $i$ so far, and then switch to a different uniformly random arm. (Notice that if $m \leq f^*(T)$ then we will never switch away from the optimal arm.)  See Algorithm \ref{alg:rrr}.
In Theorem~\ref{thm:alg1approx}, we show that if $m$ is within a constant factor of $f^*(T)$, then the algorithm achieves approximation factor $O(\sqrt{k})$, matching the lower bound from Section~\ref{sec:LB}. We then consider algorithms that aim to adaptively learn a good value of $m$ in Section~\ref{sec:remassump} and give approximation guarantees for the case that $f^*(T)$ and $T$ are not known in advance.

\begin{algorithm}
     \caption{Random round robin}
     \label{alg:rrr}
\begin{algorithmic}
    \State Parameters $m, T$
    \State $t \leftarrow 0$
    \State $R \leftarrow 0$ 
    \While{time not yet expired}
        \State $i \leftarrow$ Randomly choose arm that has not been chosen so far
    \State $t_i \leftarrow 0$
     \While{$f_{i}(t_i) > m \, t_i/ T  $}
     \State $t_i \leftarrow t_i + 1$ \;
    
     \State $t \leftarrow t + 1$ \;
     
     \State pull arm $i$ \;
     
     \State $R \leftarrow R + f_{i}(t_i)$ \;

     \EndWhile
     \EndWhile \\
     \Return{$R$}
 \end{algorithmic}
\end{algorithm}

    
    
    
    

         
    
     
     


To analyze the performance of this algorithm, we begin with the following helpful fact.

\begin{claim} \label{cl:armreward}
    If $m \in [\frac{1}{c_2} f^*(T), f^*(T)]$ and Algorithm~\ref{alg:rrr} plays arm $i$ for $t_i$ steps, then the algorithm receives total reward at least $\lbt{t_i-1}$.
\end{claim}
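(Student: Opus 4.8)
The plan is to read the stopping rule directly off the \texttt{while} loop and turn it into a per-pull lower bound, then sum. When Algorithm~\ref{alg:rrr} plays arm $i$ for $t_i$ steps, the loop guard $f_i(t_i) > m\,t_i/T$ must have passed at the top of each of the $t_i$ iterations. Tracking the index carefully, the guard that admits the $(s+1)$-st pull is the inequality $f_i(s) > m s/T$, so the pulls $1,\dots,t_i$ are certified by the inequalities $f_i(s) > m s/T$ for $s=0,1,\dots,t_i-1$. This is the only place the algorithm's behavior enters, and getting the off-by-one right is the one step that needs care: we control $f_i(s)$ only up to $s=t_i-1$, not $s=t_i$, because the reward of the final pull is not vouched for by any guard that was checked.

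With those inequalities in hand, I would lower-bound the total reward by the contribution of arm $i$ alone (all rewards are non-negative, so this is valid), discard the last uncontrolled term, and sum the guaranteed ones. This gives total reward $\ge \sum_{s=1}^{t_i} f_i(s) \ge \sum_{s=1}^{t_i-1} f_i(s) > \frac{m}{T}\sum_{s=1}^{t_i-1} s = \frac{m}{T}\cdot\frac{(t_i-1)t_i}{2}$, which is just an arithmetic series and involves no real work.

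It then remains to convert $m$ into $\OPT_T$. Using the left endpoint of the hypothesized interval, $m \ge f^\star(T)/c_2$, together with $\OPT_T = \sum_{t=1}^{T} f^\star(t) \le T f^\star(T)$ (each summand is at most $f^\star(T)$ since $f^\star$ is increasing), I get $m \ge \OPT_T/(c_2 T)$. Substituting yields total reward $> \frac{\OPT_T}{c_2 T^2}\cdot\frac{(t_i-1)t_i}{2}$, and since $t_i \ge t_i-1$ I replace $(t_i-1)t_i$ by $(t_i-1)^2$ to obtain total reward $\ge \left(\frac{t_i-1}{T}\right)^2 \frac{\OPT_T}{2c_2} = \lbt{t_i-1}$, which is exactly the claimed bound.

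I expect no genuine obstacle here. The only subtlety is the indexing of the loop guard (identifying precisely which $t_i$ values come with a certified lower bound), and the only nontrivial external fact is the elementary inequality $\OPT_T \le T f^\star(T)$. Notably, the upper endpoint $m \le f^\star(T)$ of the interval is not used in this particular bound; it is presumably needed elsewhere to guarantee that the algorithm never abandons the optimal arm.
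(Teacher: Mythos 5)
Your proof is correct and follows essentially the same route as the paper's: lower-bound the rewards of pulls $1,\dots,t_i-1$ by the loop guard $f_i(s) > m\,s/T$, sum the arithmetic series, and convert $m$ to $\OPT_T$ via $\OPT_T \le T f^\star(T) \le c_2 m T$. Your explicit treatment of the off-by-one in the loop guard, and your remark that the upper endpoint $m \le f^\star(T)$ is only needed later (to ensure the optimal arm is never abandoned), are both accurate refinements of what the paper states more tersely.
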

\begin{proof}
    We know that $i$ is played until $f_{i}(t_i) < m \, t_i/ T$. This gives us that the algorithm's reward is at least:
    \begin{align}
         \sum_{\tau = 1}^{t_i-1} f_{i}(\tau) &\ge \frac mT \sum_{\tau = 1}^{t_i-1} \tau \ge \frac{(t_i-1)^2}{2} \frac mT.
    \end{align}
    Since $\OPT_T \leq f^*(T)T \leq c_2 mT$, this is at least $\lbt{t_i-1}$ as desired.
\end{proof}

Next, define $V(T', k') = \mathbb{E}[$ reward from Algorithm \ref{alg:rrr} when there are a total of $k'$ arms and the algorithm is run for time $T'+k']$. We show that $V(T',k')$ satisfies a natural recurrence and then solve that recurrence. 


\begin{lemma} \label{lem:recurrence}
    If $m \in [\frac{1}{c_2} f^*(T), f^*(T)]$ then the quantity $V(T', k') = \mathbb{E}[$ reward from Algorithm \ref{alg:rrr} when there are a total of $k'$ arms and the algorithm is run for time $T'+k']$ satisfies the recurrence below:
      \begin{align*}
        V(T', k') 
        &\ge \frac {1}{k'} \OPT_{T'+k'} + \left( 1 - \frac{1}{k'} \right) \min_{0 \le t \le T'+k'-1} \left\{ \lbt{t} + V(T'-t, k'-1)  \right\}
    \end{align*}
    where we define $V(T',k')=0$ for $T' \leq 0\,, \forall \, k'$. 
\end{lemma}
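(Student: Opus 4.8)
The plan is to condition on the identity of the first arm that Algorithm~\ref{alg:rrr} selects, which is uniform over the $k'$ arms, and then combine the two resulting cases via the law of total expectation, using Claim~\ref{cl:armreward} for the reward collected on the first arm and an appeal to $V$ itself on the residual subproblem. To make the recursion compose, I would read $V(T',k')$ as a worst-case lower bound on the expected reward, i.e. valid for every $k'$-arm instance sharing the fixed optimal function $f^*$; this is what lets me bound the value of the (specific) residual $(k'-1)$-arm instance below by $V(\cdot,k'-1)$.

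In the first case (probability $1/k'$) the optimal arm is chosen first. Here I would verify, from the diminishing-returns (discrete concavity) of $f^*$, that the algorithm never abandons it within the horizon: since $f^*$ lies above the chord through its endpoints, $f^*(t) \ge \frac{t}{T} f^*(T) \ge \frac{t}{T} m$ for all $t \le T$ (using $m \le f^*(T)$), so the while-loop test $f_i(t_i) > m\,t_i/T$ never fails. Hence the optimal arm is pulled for all $T'+k'$ steps, collecting exactly $\OPT_{T'+k'}$ and contributing the term $\frac{1}{k'}\OPT_{T'+k'}$.

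In the second case (probability $1-\frac{1}{k'}$) a non-optimal arm $j$ is chosen first and played for some $s \ge 1$ steps before its test fails; by Claim~\ref{cl:armreward} the reward accrued on $j$ is at least $\lbt{s-1}$. After abandoning $j$ exactly $k'-1$ arms remain (still including the optimal one) with residual horizon $T'+k'-s$; rewriting this as $(T'-(s-1))+(k'-1)$ and setting $t=s-1$ identifies the residual process as an instance governed by $V(T'-t,k'-1)$. Thus, conditioned on first choosing $j$, the expected reward is at least $\lbt{t}+V(T'-t,k'-1)$, which is at least $\min_{0\le t\le T'+k'-1}\{\lbt{t}+V(T'-t,k'-1)\}$ because $s\in[1,T'+k']$ forces $t\in[0,T'+k'-1]$. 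Averaging the two cases with probabilities $\frac{1}{k'}$ and $1-\frac{1}{k'}$ gives the stated recurrence, and the convention $V(T',k')=0$ for $T'\le 0$ absorbs the exhausted-horizon boundary.

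The one place I expect real friction is the time bookkeeping: I must check that the ``$+k'$'' offset in the definition of $V$ is consumed correctly, so that playing the first arm for $s$ steps and dropping to $k'-1$ arms leaves precisely $V(T'-(s-1),k'-1)$ — the index shift $t=s-1$ is exactly what the reduction $k'\mapsto k'-1$ in the offset pays for — and that the residual $(k'-1)$-arm instance is again controlled by the worst-case quantity $V$, so the recursion closes. The concavity argument certifying that the algorithm never leaves the optimal arm is the other step I would write out in full.
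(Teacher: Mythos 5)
Your proof follows the paper's argument exactly: condition on whether the first random choice is the optimal arm, credit $\OPT_{T'+k'}$ in the lucky case (using $m \le f^*(T)$ to ensure the optimal arm is never abandoned), apply Claim~\ref{cl:armreward} to the first non-optimal arm and recurse on the remaining $k'-1$ arms, and close with a worst-case minimum over the abandonment time. You in fact spell out two details the paper leaves implicit — the chord/concavity argument for never leaving the optimal arm, and the index shift $t=s-1$ that makes the ``$+k'$'' offset in the definition of $V$ work out — so nothing is missing.
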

\begin{proof}
With probability $\frac{1}{k'}$, Algorithm~\ref{alg:rrr} finds the optimal arm in its first random choice, in which case it receives reward $\OPT_{T'+k'}$. If the arm chosen was not the best arm, let the time duration for which reward is received from it be $t + 1$. By Claim~\ref{cl:armreward}, the algorithm receives reward at least $\lbt{t}$ while playing that arm, after which the algorithm recurses in a game with one fewer arm (so $k'\leftarrow k'-1$) and with $T' + k' - t - 1$ time steps to go (so $T' \leftarrow T' - t$). Here, we use the fact that we never discard the optimal arm, which follows from $m \leq f^*(T)$. 

Since the value of $t$ depends on the adversarially-chosen function $f_i$, we take a worst case view and lower bound it by the worst possible value of $t \in [0,T'+k'-1]$, 
giving us the recurrence shown.
%
 \end{proof}

 Now, we analyze this recurrence to get a closed form bound on its value in terms of the amount of reward received by the optimal strategy played for $T$ steps.

\begin{lemma}  \label{lemma:recur-value}
    Suppose we run Algorithm~\ref{alg:rrr} with a parameter $m \in [\frac{1}{c_2} f^*(T), f^*(T)]$.
    Then the recurrence given in Lemma~\ref{lem:recurrence} evaluates as follows: for all $T'\,, V(T', k') \ge \lbt{T'} \cdot \frac{1}{\gka{k'}}\,.$ In particular, $V(T, k) \ge  \frac{\OPT_T}{2c_2 \cdot \gk}\,.$ 
\end{lemma}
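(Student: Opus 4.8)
The plan is to establish $V(T',k') \ge \lbt{T'}\cdot \tfrac{1}{\gka{k'}}$ by induction on the number of arms $k'$, feeding the inductive hypothesis into the recurrence of Lemma~\ref{lem:recurrence} and then optimizing over the adversary's worst choice of $t$. Two ingredients drive the argument. The first is a concavity fact that lets me control the ``found the optimal arm'' term $\OPT_{T'+k'}$. I claim $\OPT_{T'} \ge \tfrac{1}{2}\left(\tfrac{T'}{T}\right)^2 \OPT_T$ for every $T' \le T$: since $f^*$ is concave and nonnegative it lies above the chord from $(0,f^*(0))$ to $(T,f^*(T))$, giving $f^*(t) \ge \tfrac{t}{T} f^*(T)$, and summing yields $\OPT_{T'} \ge \tfrac{f^*(T)}{T}\cdot\tfrac{(T')^2}{2}$; combined with $\OPT_T \le T f^*(T)$ this gives the quadratic bound. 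Because the hypotheses force $c_2 \ge 1$ (as $m \le f^*(T) \le c_2 m$), I get $\OPT_{T'+k'} \ge \OPT_{T'} \ge c_2 \lbt{T'} \ge \lbt{T'}$, which is the clean form I will use throughout. Note I route through $\OPT_{T'}$ rather than $\OPT_{T'+k'}$ to avoid invoking the concavity bound at an argument exceeding $T$.

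For the base case $k'=1$ the recurrence reads $V(T',1) \ge \OPT_{T'+1} \ge \lbt{T'}$ by the fact above, matching the claim since $\gka{1}=1$. For the inductive step I assume $V(T'',k'-1) \ge \lbt{T''}/\gka{k'-1}$ for all $T''$ and substitute this into the minimization in Lemma~\ref{lem:recurrence}, splitting on the sign of $T'-t$. When $t \ge T'$ the recursive term is $0$ but $\lbt{t} \ge \lbt{T'}$ is already large, so the binding range is $t < T'$, where the quantity is at least $\lbt{t} + \lbt{T'-t}/\gka{k'-1}$. Writing $\lbt{s} = c\,s^2$ with $c = \mt/T^2$, this equals $c\big(t^2 + (T'-t)^2/\gka{k'-1}\big)$; relaxing $t$ to the reals, an elementary one-variable minimization (optimum at $t^\star = T'/(\gka{k'-1}+1)$) gives minimum value $\tfrac{c (T')^2}{\gka{k'-1}+1} = \tfrac{\lbt{T'}}{\gka{k'-1}+1}$, and since the integer minimum only exceeds the real one, the whole minimization is bounded below by $\tfrac{\lbt{T'}}{\gka{k'-1}+1}$.

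Plugging this together with $\OPT_{T'+k'}\ge\lbt{T'}$ back into the recurrence, it remains to verify the purely numerical inequality $\tfrac{1}{k'} + \tfrac{k'-1}{k'}\cdot\tfrac{1}{\gka{k'-1}+1} \ge \tfrac{1}{\gka{k'}}$. I would simplify the left side to $\tfrac{1}{k'}\big(\gka{k'-1} + \tfrac{1}{\gka{k'-1}+1}\big)$, reducing the goal to $\gka{k'-1} + \tfrac{1}{\gka{k'-1}+1} \ge \gka{k'}$; using the identity $\gka{k'}-\gka{k'-1} = \tfrac{1}{\gka{k'}+\gka{k'-1}}$ this is equivalent to $\gka{k'-1}+1 \le \gka{k'}+\gka{k'-1}$, i.e.\ $\gka{k'}\ge 1$, which holds. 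Setting $T'=T$ and $k'=k$ then gives $V(T,k) \ge \lbt{T}/\gk = \mt/\gk = \tfrac{\OPT_T}{2c_2\gk}$, as claimed.

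The main obstacle I anticipate is making the constants close exactly in the inductive step. The quadratic minimization must produce the denominator $\gka{k'-1}+1$ (not merely $\gka{k'-1}$), and the subsequent numerical inequality succeeds only because the $\tfrac{1}{k'}\OPT_{T'+k'}$ term --- which looks negligible at first glance --- supplies precisely the slack needed to promote $\gka{k'-1}$ to $\gka{k'}$ via the telescoping identity for consecutive square roots. Dropping that term, or being loose enough in the concavity bound that $c_2$ fails to absorb into the constant, breaks the induction for small $k'$ (already at $k'=2$), so I would track these constants exactly rather than reasoning asymptotically.
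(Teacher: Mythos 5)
Your proof is correct and takes essentially the same route as the paper's: induct, substitute the inductive hypothesis into the recurrence of Lemma~\ref{lem:recurrence}, solve the one-dimensional quadratic minimization over $t$, and close with the algebraic identity for $\sqrt{k'}$; the only differences are cosmetic (you keep $\gka{k'-1}$ through the minimization and finish with an inequality via the telescoping identity for consecutive square roots, whereas the paper first relaxes $\frac{1}{\gka{k'-1}}$ to $\frac{1}{\gka{k'}}$ so that its final step is an exact identity). If anything, your write-up is more careful than the paper's: you explicitly state and prove the concavity bound $\OPT_{T'} \ge \frac{1}{2}\left(\frac{T'}{T}\right)^2 \OPT_T$ and you handle the $t \ge T'$ branch of the minimization (where the recursive term is zero), both of which the paper uses only implicitly.
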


\begin{proof}
We prove the desired statement by induction on $T'$ and $k'$. 

\paragraph{Base Case:} We start by considering the base cases $V(1, k')$ and $V(T',1)$.  For the first case, notice that if the very first pull of an arm produces reward less than $m/T$, then the algorithm will immediately choose a new arm for its next pull.  Therefore, the algorithm is guaranteed to at least once pull an arm with reward at least $m/T \ge \frac{\OPT_T}{c_2 T^2} = \lbt{1}$.  For the second case, the instance only has one arm, which must be the optimal arm, so it clearly receives $\OPT_{T' + 1} \ge \OPT_{T'} \geq \lbt{T'}$ reward.

\paragraph{Inductive Assumption:} Let us assume that $\forall \, T'' < T'\,$ and $k'' < k', V(T'', k'') \ge \lbt{T''} \cdot \frac{1}{\gka{k''}}\,.$

\paragraph{Induction:} 
First, observe that the right-hand side of the  recurrence in Lemma~\ref{lem:recurrence} can be lower bounded by replacing $\OPT_{T'+k'}$ with $\OPT_{T'}$: 
\begin{align}
    V(T', k') \ge \frac{1}{k'} \OPT_{T'} + \left( 1 - \frac{1}{k'} \right) \min_t \left\{ \lbt{t} + V(T'-t, k'-1)  \right\}\,. 
\end{align}
From the inductive assumption, we have: 
\begin{align}
    V(T', k') 
    &\ge \frac{1}{k'} \OPT_{T'} + \left( 1 - \frac{1}{k'} \right) \min_t \left\{ \lbt{t} + \frac{\lbt{T'-t}}{\gka{k'-1}}  \right\} \\
    &\ge \frac{1}{k'} \OPT_{T'} + \left( 1 - \frac{1}{k'} \right) \min_t \left\{ \lbt{t} + \frac{1}{\gka{k'}} \lbt{T'-t}\right\} \label{eqn:plugin}
\end{align}
Next, let us compute the minimum desired. We take the derivative with respect to $t\,$ and set it to 0 (note that at $t=0\,,$ the derivative is negative, and at $t=T\,,$ the derivative is positive, so the minimum must lie in the middle): 
\begin{align}
    \frac{2t}{T^2} \mt + \mt \frac{1}{\gka{k'}} \cdot 2 \left(\frac{T'-t}{T} \right) \frac{-1}{T} &= 0 \\
    t &= \frac{1}{\gka{k'}} (T' - t ) \\
    t &= \frac{T' }{\gka{k'} + 1}
\end{align}

Here, note that $\frac{T'}{\gka{k'} + 1} < T'$. 
Plugging this back in, we get:

\begin{align}
    \min_t \left\{ \left(\frac tT \right)^2 \mt + \frac{1}{\gka{k'}} \left( \frac{T'-t}{T} \right)^2 \mt \right\} &= \mt \left(  \frac{T'}{T}  \right)^2 \frac{1}{\gka{k'} + 1}
\end{align}

Finally, we can plug this back to get the guarantee:
\begin{multline*}
    \frac{1}{k'} \OPT_{T'} + \left(  1 - \frac{1}{k'} \right) \frac{\OPT_T}{2c_2} \left( \frac{T'}{T} \right)^2 \frac{1}{\gka{k'} + 1} \\
\hspace{45mm}
\end{multline*}

\vspace*{-10mm}

\begin{align*}
    & \ge \frac{\OPT_T}{2c_2} \left( \frac{T'}{T} \right)^2\left( \frac{1}{k'}  + \left(  1 - \frac{1}{k'} \right) \frac{1}{\gka{k'} + 1} \right)\\  
     &= \frac{\OPT_T}{2c_2} \left( \frac{T'}{T} \right)^2 \left(  \frac{\gka{k'} + k'}{k'(\gka{k'} + 1)}  \right) = \frac{\OPT_T}{2c_2} \left( \frac{T'}{T} \right)^2 \frac{1}{\sqrt{k'}}\,.
\end{align*}
\end{proof}

\begin{theorem} \label{thm:alg1approx}
    If $m \in [\frac{1}{c_2} f^*(T-k), f^*(T-k)]$ and $T \geq 2k$, then Algorithm~\ref{alg:rrr}, run with $T-k$ as its parameter ``$T$'', receives expected reward at least $\frac{\OPT_T}{8 c_2\gk}$ in $T$ steps.  
  %
\end{theorem}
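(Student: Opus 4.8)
The plan is to invoke Lemma~\ref{lemma:recur-value} essentially as a black box, but with the algorithm's internal horizon parameter set to $\hat T := T-k$ rather than to the true budget $T$. Concretely, I would run Algorithm~\ref{alg:rrr} with parameter ``$T$'' equal to $\hat T = T-k$; the hypothesis $m \in [\frac{1}{c_2} f^*(T-k), f^*(T-k)]$ is then exactly the requirement $m \in [\frac{1}{c_2} f^*(\hat T), f^*(\hat T)]$ demanded by the lemma. Recall that $V(T',k')$ measures expected reward accrued over $T'+k'$ steps; taking $T'=\hat T$ and $k'=k$ gives a run of length $\hat T + k = T$, matching our $T$-step budget exactly. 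Hence the ``in particular'' conclusion of Lemma~\ref{lemma:recur-value}, read with $\hat T$ in the role of the horizon, yields that the expected reward over $T$ steps is at least $V(\hat T,k) \ge \frac{\OPT_{\hat T}}{2c_2 \gk} = \frac{\OPT_{T-k}}{2c_2 \gk}$.

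It then remains to pay for analyzing the algorithm against the shorter horizon $T-k$ instead of $T$, i.e. to compare $\OPT_{T-k}$ with $\OPT_T$. I claim concavity together with $T \ge 2k$ gives $\OPT_{T-k} \ge \frac{1}{4}\OPT_T$, which costs only a factor of $4$ and produces the stated constant $8c_2$. To prove this, write $S(n) = \sum_{t=1}^n f^*(t) = \OPT_n$ and show that $S(n)/n^2$ is non-increasing in $n$. The key input is the chord bound for concave sequences: since $f^*$ is concave with $f^*(0)\ge 0$, for every $t \le n+1$ we have $f^*(t) \ge \frac{t}{n+1}f^*(n+1)$, whence $S(n) \ge \frac{f^*(n+1)}{n+1}\sum_{t=1}^n t = \frac{n}{2}f^*(n+1)$. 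Plugging this into the elementary inequality equivalent to one step of monotonicity, namely $S(n)(2n+1) \ge n^2 f^*(n+1)$, verifies $S(n)/n^2 \ge S(n+1)/(n+1)^2$. Monotonicity then gives $\OPT_{T-k} = S(T-k) \ge \left(\frac{T-k}{T}\right)^2 S(T) = \left(\frac{T-k}{T}\right)^2 \OPT_T$, and since $T \ge 2k$ forces $T-k \ge T/2$, this is at least $\frac{1}{4}\OPT_T$.

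Combining the two steps, the expected reward over $T$ steps is at least $\frac{\OPT_{T-k}}{2c_2 \gk} \ge \frac{\OPT_T}{8c_2 \gk}$, as desired. I expect the first step (tracking the parameter shift) to be essentially mechanical, once one is careful that $V(\hat T,k)$ already accounts for precisely $T$ pulls and that the hypothesis on $m$ is stated relative to $f^*(T-k)$. The main obstacle is the second step: the naive decoupled bound ``$\OPT_{T-k} \ge \frac{f^*(T)(T-k)^2}{2T}$ and $\OPT_T \le T f^*(T)$'' yields only the weaker ratio $1/8$ (the two bounds are never simultaneously tight, e.g. linear versus plateauing $f^*$), so to recover the sharp $1/4$ one genuinely needs the monotonicity of $S(n)/n^2$ rather than bounding numerator and denominator separately.
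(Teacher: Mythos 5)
Your proposal is correct and follows exactly the paper's route: invoke Lemma~\ref{lemma:recur-value} with the algorithm's parameter set to $T-k$ (so $V(T-k,k)$ covers precisely $T$ pulls), then lower-bound $\OPT_{T-k} \ge \left(\frac{T-k}{T}\right)^2 \OPT_T \ge \frac{1}{4}\OPT_T$ using $T \ge 2k$. The only difference is that the paper asserts the inequality $\OPT_{T'} \ge \left(\frac{T'}{T}\right)^2 \OPT_T$ without justification, whereas you supply a correct proof of it (monotonicity of $\OPT_n/n^2$ via the concave chord bound), which is a worthwhile addition since the naive decoupled estimate would only give the weaker constant $16c_2$.
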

\begin{proof}
    Lemma \ref{lemma:recur-value} shows that Algorithm \ref{alg:rrr}, run for $T+k$ steps, receives expected reward at least $\frac{\OPT_T}{2c_2 \cdot \gk}$.  Therefore, if we run the algorithm with $T-k$ as its value of ``$T$'', then in $T$ steps it will receive expected reward at least 
    $$\frac{\OPT_{T-k}}{2c_2 \cdot \gk} \; \geq \left(\frac{T-k}{T}\right)^2 \frac{\OPT_{T}}{2c_2 \cdot \gk} \; \geq \; \frac{\OPT_{T}}{8c_2 \cdot \gk} $$
    for $T \geq 2k$.
    %
    %
\end{proof}

\section{Removing Assumptions} \label{sec:remassump}

In the previous section, we assumed knowledge of $f^\star(T)\,,$ the maximum value attained by the best arm, to within a constant factor. In this section, we give an extension of our previous algorithm that essentially makes an educated guess of this value based on some initial exploration. In order to do so, we now require that $T > 4k\,.$ This is so that we can conduct an initial exploration phase for the first half of the time steps and then conduct our previous algorithm for the second half based on our ``learned'' guess for $m\,,$ which we shall call $\hat{m}\,.$

\paragraph{Main Ideas For Learning $\pmb{\hat{m}}$} We spend half of the time available to us ``learning'' the parameter $m$ with which we will run  Algorithm~\ref{alg:rrr}, as detailed in Algorithm~\ref{alg:get-mhat}. In the other half of the time available, we accrue reward through running Algorithm~\ref{alg:rrr}. In order to learn the parameter, we follow the following main steps: first, we pull each arm $T/(2k)$ times. We use the last two values to determine an upper and lower bound for the maximum possible value achievable by {\em that} arm, $f_i(T)\,,$ (lines 1-3 of Algorithm~\ref{alg:get-mhat}, analyzed in Lemma~\ref{lemma:maxintbytstar}). We then narrow down a range inside which the maximum of the {\em best} arm ($f^\star(T)$) must lie (lines 4-5 of Algorithm~\ref{alg:get-mhat}, analyzed in Lemma~\ref{lemma:intervalbdd}). Finally, we randomly choose a value in that interval that, with probability $\Omega(\frac{1}{\log k})$, is within a factor of 2 of $f^\star(T)\,$ (lines 6-7 of Algorithm~\ref{alg:get-mhat}, analyzed in Theorem~\ref{thm:generalresult}). Once we have that, we can use the guarantee from Theorem~\ref{thm:alg1approx} to analyze the output of Algorithm~\ref{alg:rrr} run with the learned parameter $m\,.$\par

Technically, we will learn an estimate of $f^*(\frac{T}{2} - k)$ (rather than $f^*(T)$) since that will be the value of ``$T$'' given to Algorithm \ref{alg:rrr}; for this reason, we provide a parameter $T_{pred}$ to the algorithm below.  Note that $f^*(T)$ and $f^*(\frac{T}{2} - k)$ differ by only a constant factor.

\begin{algorithm}
\caption{Find $\hat{m}$} \label{alg:get-mhat}
    \begin{algorithmic}
    \State \textbf{Input:} parameter $T_\text{pred}$ 
    
    \For{each arm $i$}
        \State pull arm $T/(2k)$ times 
        
        \State $\hat{m}_L^{(i)} \leftarrow f_i\left(\frac{T}{2k}\right)$ 
        
        \State $\hat{m}_U^{(i)} \leftarrow f_i\left(\frac{T}{2k}\right) + (f_i\left(\frac{T}{2k}\right) - f_i\left(\frac{T}{2k} -1 \right)) \left(T - \frac{T}{2k}\right)$ \;
        
     \EndFor
    \State $L \leftarrow \frac 12 \max_i \hat{m}_L^{(i)}$ \;
    
    \State $U \leftarrow \max_i \hat{m}^{(i)}_U$ \;
    
    \State Define uniform probability distribution $p$ over $\{-\log (T/T_\text{pred}), \hdots, \, 0,  1, 2, \hdots\,, \log(U/L) \}$ 
    
    \State Draw $j \sim p$ \\
    
     \Return{$L \cdot 2^j$}
    \end{algorithmic}
\end{algorithm}

    
        
        
        
    
    
    
    

\subsection{Computing Range for a Fixed Arm}
Here, we describe how we compute the range in which the maximum lies for a fixed arm, i.e., we compute upper and lower bounds on $f_i(T)$ based on an exploration phase on $f_i\,.$


        

\begin{lemma} \label{lemma:maxintbytstar}
    The procedure detailed in lines 1-3 of Algorithm~\ref{alg:get-mhat} provides, for a fixed arm with reward function $f$, a range $\mathcal{R} \coloneqq [\hat{m}_L, \hat{m}_U]$ such that $f(T) \in \mathcal{R}$ and $\hat{m}_U \le  2k \, \hat{m}_L\,.$ 
\end{lemma}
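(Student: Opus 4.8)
The plan is to establish the two assertions separately — the containment $f(T)\in[\hat m_L,\hat m_U]$ and the multiplicative gap $\hat m_U\le 2k\,\hat m_L$ — using only monotonicity and the diminishing-returns property of $f$. Throughout I write $s=T/(2k)$ for the number of pulls in the exploration phase, so that $\hat m_L=f(s)$ and $\hat m_U=f(s)+\bigl(f(s)-f(s-1)\bigr)(T-s)$; since $T>4k$ we have $s\ge 2$, so $f(s-1)$ is well defined.

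For the containment, the lower bound $f(T)\ge\hat m_L$ is immediate from monotonicity, as $T\ge s$ gives $f(T)\ge f(s)=\hat m_L$. For the upper bound I would invoke diminishing returns, which asserts that consecutive increments are non-increasing; hence for every $\tau>s$ we have $f(\tau)-f(\tau-1)\le f(s)-f(s-1)$. Summing over $\tau=s+1,\dots,T$ (there are $T-s$ such terms) telescopes the left side to $f(T)-f(s)$, yielding $f(T)-f(s)\le\bigl(f(s)-f(s-1)\bigr)(T-s)$, which rearranges exactly to $f(T)\le\hat m_U$.

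For the gap, the key step is to bound the last measured increment $f(s)-f(s-1)$ in terms of $f(s)$ itself. Again by diminishing returns, this increment is the smallest among the first $s$ increments, so it is at most their average: writing $f(s)=\sum_{\tau=1}^{s}\bigl(f(\tau)-f(\tau-1)\bigr)$ (using the convention $f(0)=0$) and bounding each of the $s$ summands below by $f(s)-f(s-1)$ gives $f(s)\ge s\,\bigl(f(s)-f(s-1)\bigr)$, i.e. $f(s)-f(s-1)\le f(s)/s$. Substituting this into the definition of $\hat m_U$ gives $\hat m_U\le f(s)+\tfrac{f(s)}{s}(T-s)=f(s)\cdot\tfrac{T}{s}$, and since $T/s=2k$ this is precisely $2k\,f(s)=2k\,\hat m_L$.

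The monotonicity bound and the telescoping sum are routine; the step carrying the real content is the inequality $f(s)-f(s-1)\le f(s)/s$, i.e. recognizing that concavity forces the most recent increment to be no larger than the average slope over $[0,s]$. This is exactly what turns the naive ``extrapolate by the final slope'' estimate of $f(T)$ into a guarantee that the estimate overshoots $f(s)$ by at most the factor $T/s=2k$. The only assumption I would flag is the implicit convention $f(0)=0$ (no reward for zero pulls), which makes the averaging clean; it can be relaxed to $f(0)\ge 0$ without affecting the conclusion.
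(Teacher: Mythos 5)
Your proof is correct and follows essentially the same route as the paper's: monotonicity gives the lower containment, telescoping the increments under diminishing returns gives $f(T)\le \hat m_U$, and bounding the last observed increment $f(s)-f(s-1)$ by the average increment $f(s)/s$ yields the factor $T/s = 2k$. The only difference is presentational — you make explicit the averaging argument (with the $f(0)=0$ convention) that the paper compresses into ``follows immediately from the diminishing returns property.''
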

\begin{proof}
    We use the diminishing returns property to show both parts. First, by the fact that the reward functions are increasing, it is clear that $f(T) \ge f(T/(2k))\,.$ Then, due to the diminishing returns property:
    \begin{align}
        f(T) 
        &= f\left(\frac{T}{2k}\right) + \sum_{n = 1}^{T-\frac{T}{2k}} f\left(\frac{T}{2k} + n\right) - f\left(\frac{T}{2k} + n -1\right)\\
        &\le f\left(\frac{T}{2k}\right) + \left(T - \frac{T}{2k}\right) \left( f\left(\frac{T}{2k} + 1\right) - f\left(\frac{T}{2k}\right)  \right) \\
        &\le f\left(\frac{T}{2k}\right) + \left(T - \frac{T}{2k}\right) \left( f\left(\frac{T}{2k}\right) - f\left(\frac{T}{2k} - 1\right)  \right)\,.
    \end{align}

Next, we show that $\hat{m}_U \le  2k \hat{m}_L$, which follows immediately from the diminishing returns property. 
 In particular, $\hat{m}_U =  f(\frac{T}{2k}) + (f(\frac{T}{2k}) - f(\frac{T}{2k}-1)) (T-\frac{T}{2k}) \leq f(\frac{T}{2k}) + \frac{2k}{T}f(\frac{T}{2k})(T-\frac{T}{2k}) = f(\frac{T}{2k}) + (2k-1)f(\frac{T}{2k}) = 2k \, \hat{m}_L.$
%


\end{proof}

\subsection{Upper and Lower Bounds on $f^\star(T)$ From Exploration}
Now, for each arm, we have a range within which its maximum must lie. Next, we combine these bounds to get a relatively small interval in which we can be sure $ f^\star(T)$ lies. 

\begin{lemma}\label{lemma:intervalbdd}
    Define $L \coloneqq \frac 12 \max_{i} \hat{m}^{(i)}_L$ and $U \coloneqq \max_{i} \hat{m}^{(i)}_U\,.$ Then, $f^\star(T) \in [L, U]\,,$ and $\frac{U}{L} \le 4k.$
\end{lemma}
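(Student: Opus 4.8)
The plan is to establish the two claims separately, leaning on Lemma~\ref{lemma:maxintbytstar} for anything involving a single arm's interval and on the definition of the optimal arm for the lower containment.

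For the upper endpoint, I would note that the optimal arm $i^\star$ is itself one of the $k$ arms, so applying Lemma~\ref{lemma:maxintbytstar} to it gives $f^\star(T) = f_{i^\star}(T) \le \hat{m}_U^{(i^\star)} \le \max_i \hat{m}_U^{(i)} = U$. This half is immediate.

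The lower endpoint is the step I expect to be the main obstacle, because $f^\star$ is defined to maximize the \emph{cumulative} reward $\sum_t f_i(t)$ rather than the final value $f_i(T)$, so I cannot directly compare $f^\star(T)$ against the largest observed value $\max_i f_i(T/(2k))$ that defines $L$. The fix is to route through $\OPT_T$ together with monotonicity. Let $j$ attain $\max_i f_i(\frac{T}{2k})$, so that $L = \frac12 f_j(\frac{T}{2k})$. On one hand, optimality of $f^\star$ yields $\OPT_T \ge \sum_{t=1}^T f_j(t) \ge \left(T - \frac{T}{2k}\right) f_j\!\left(\frac{T}{2k}\right)$, using that $f_j$ is increasing so every term with $t \ge \frac{T}{2k}$ is at least $f_j(\frac{T}{2k})$. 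On the other hand, monotonicity of $f^\star$ gives $\OPT_T = \sum_{t=1}^T f^\star(t) \le T f^\star(T)$. Combining the two bounds, $f^\star(T) \ge \left(1 - \frac{1}{2k}\right) f_j\!\left(\frac{T}{2k}\right) \ge \frac12 f_j\!\left(\frac{T}{2k}\right) = L$, where the final inequality holds since $1 - \frac{1}{2k} \ge \frac12$ for $k \ge 1$. This establishes $f^\star(T) \in [L, U]$.

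Finally, for the ratio bound I would pick the arm $p$ attaining $U = \hat{m}_U^{(p)}$ and invoke the second part of Lemma~\ref{lemma:maxintbytstar} for that arm: $U = \hat{m}_U^{(p)} \le 2k\,\hat{m}_L^{(p)} = 2k\, f_p\!\left(\frac{T}{2k}\right) \le 2k \max_i f_i\!\left(\frac{T}{2k}\right) = 2k \cdot 2L = 4kL$, so $U/L \le 4k$. The only genuinely non-routine observation is that the factor of $\frac12$ deliberately built into the definition of $L$ is exactly what absorbs the $\left(1 - \frac{1}{2k}\right)$ loss incurred in transferring from an observed early value to the optimal arm's final value; everything else is monotonicity and a direct appeal to the preceding lemma.
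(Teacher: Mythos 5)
Your proposal is correct and follows essentially the same route as the paper: the lower endpoint via $T f^\star(T) \ge \OPT_T \ge \sum_t f_j(t) \ge \frac{T}{2} f_j\left(\frac{T}{2k}\right)$ (the paper writes this as one chain, absorbing your $\left(1-\frac{1}{2k}\right) \ge \frac12$ step directly), the upper endpoint by applying Lemma~\ref{lemma:maxintbytstar} to the optimal arm, and the ratio bound by applying that lemma's inequality $\hat{m}_U^{(i)} \le 2k\,\hat{m}_L^{(i)}$ to the arm attaining $U$. No gaps; the differences are purely presentational.
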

\begin{proof}
First, observe that by diminishing returns, we have that: 
    \begin{align}
        f^\star(T) \, T  \ge \sum_{t = 1}^{T} f^\star(t) &= \max_i \sum_{t = 1}^{T} f_i(t) \\ &\ge
        \sum_{t=1}^{T} f_p(t) \quad \text{ where } p \coloneqq \arg \max_i f_i\left(\frac{T}{2k}  \right) \\
        &\ge f_p\left(\frac{T}{2k}  \right) \frac {T}{2} = \hat{m}^{(p)}_L \frac{T}{2} \\
        \Leftrightarrow f^\star(T) &\ge \frac{\hat{m}^{(p)}_L}{2} = L\,.
    \end{align}
    
Next, since $f^\star(T) \le f^\star(\frac{T}{2k}) + (T-\frac{T}{2k}) ( f^\star(\frac{T}{2k}) - f^\star(\frac{T}{2k} - 1))$ (due to diminishing returns), we also have that $f^\star(T) \le \max_i f^{(i)}(\frac{T}{2k}) + (T-\frac{T}{2k}) ( f^{(i)}(\frac{T}{2k}) - f^{(i)}(\frac{T}{2k} - 1)) = \max_i \hat{m}^{(i)}_U = U\,.$

Now that we have that $m \in [L, U]\,,$ we verify the size of the interval. In particular, for some index $i'$ we have $U = \hat{m}^{(i')}_U \leq 2k \hat{m}^{(i')}_L \leq 4kL$, where the first inequality follows from Lemma \ref{lemma:maxintbytstar}.
%
\end{proof}

\subsection{Reward Approximation Guarantee With Learned Parameter}
Putting the pieces together, we show that choosing $\hat{m}$ randomly according to the defined uniform distribution in line 6 of Algorithm~\ref{alg:get-mhat} allows us to get reward as before with the loss of a factor only logarithmic in $k$.

\begin{theorem} \label{thm:generalresult}
When $T > 4k\,,$ picking the parameter $m$ required by Algorithm~\ref{alg:rrr} using the procedure detailed in Algorithm~\ref{alg:get-mhat} with parameter $T_\text{pred} = \frac{T}{2} - k$ and running Algorithm~\ref{alg:rrr} with that value $\hat{m}$ achieves expected reward $ALG =\Omega(\OPT/(\sqrt{k} \log k))\,.$
\end{theorem}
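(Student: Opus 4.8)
The plan is to analyze the two-phase structure of the combined procedure and reduce everything to Theorem~\ref{thm:alg1approx}. The first $T/2$ steps run Algorithm~\ref{alg:get-mhat} (pulling each of the $k$ arms $T/(2k)$ times) to produce the random guess $\hat m$, and the remaining $T/2$ steps run Algorithm~\ref{alg:rrr} with parameter $\hat m$ and with $T_{\mathrm{pred}} = T/2 - k$ playing the role of ``$T$''. I would first isolate the ``good event'' that $\hat m$ is a constant-factor estimate of $f^*(T_{\mathrm{pred}})$, show this event occurs with probability $\Omega(1/\log k)$, and then invoke Theorem~\ref{thm:alg1approx} conditioned on it.

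For the good event: by Lemma~\ref{lemma:intervalbdd} we have $f^*(T) \in [L,U]$ with $U/L \le 4k$. Since Theorem~\ref{thm:alg1approx} applied to the second-phase horizon $T/2$ requires an estimate of $f^*((T/2)-k) = f^*(T_{\mathrm{pred}})$ rather than of $f^*(T)$, I would first locate $f^*(T_{\mathrm{pred}})$ inside the candidate grid. Monotonicity gives $f^*(T_{\mathrm{pred}}) \le f^*(T) \le U$, and concavity of $f^*$ gives $f^*(T_{\mathrm{pred}}) \ge \frac{T_{\mathrm{pred}}}{T} f^*(T) \ge \frac{T_{\mathrm{pred}}}{T} L$. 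Thus $f^*(T_{\mathrm{pred}})$ lies in $[\,\tfrac{T_{\mathrm{pred}}}{T} L,\ U\,]$, which is exactly the range swept by the candidates $L\cdot 2^j$ for $j \in \{-\log(T/T_{\mathrm{pred}}),\dots,\log(U/L)\}$ in Algorithm~\ref{alg:get-mhat} (the lower exponent $-\log(T/T_{\mathrm{pred}})$ is chosen precisely so the smallest candidate equals $L\cdot T_{\mathrm{pred}}/T$). Because consecutive candidates differ by a factor of $2$, some candidate $\hat m$ satisfies $\tfrac12 f^*(T_{\mathrm{pred}}) < \hat m \le f^*(T_{\mathrm{pred}})$, i.e.\ $\hat m \in [\tfrac12 f^*(T_{\mathrm{pred}}), f^*(T_{\mathrm{pred}})]$, which is the hypothesis of Theorem~\ref{thm:alg1approx} with $c_2 = 2$.

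Next I would lower bound the probability of this good event and assemble the final bound. The grid has $\log(U/L) + \log(T/T_{\mathrm{pred}}) + 1$ equally likely points; by Lemma~\ref{lemma:intervalbdd}, $\log(U/L)\le \log(4k)=O(\log k)$, and since $T>4k$ forces $T_{\mathrm{pred}} = T/2 - k > T/4$, we have $T/T_{\mathrm{pred}} < 4$ and $\log(T/T_{\mathrm{pred}}) = O(1)$, so the good grid point is drawn with probability $\Omega(1/\log k)$. Conditioned on this event, I apply Theorem~\ref{thm:alg1approx} with second-phase horizon $T/2$ (which satisfies $T/2 \ge 2k$ since $T>4k$) and $c_2 = 2$: Algorithm~\ref{alg:rrr}, run with $T_{\mathrm{pred}}$ as its ``$T$'', earns at least $\OPT_{T/2}/(16\sqrt k)$ during the second phase. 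Since $\hat m$ is drawn independently and rewards are nonnegative, $ALG \ge \Pr[\text{good event}]\cdot \OPT_{T/2}/(16\sqrt k) = \Omega(\OPT_{T/2}/(\sqrt k \log k))$, and concavity of $f^*$ gives $\OPT_{T/2} \ge \tfrac14 \OPT_T$ (the same $(T'/T)^2$ estimate used in Theorem~\ref{thm:alg1approx}), yielding $ALG = \Omega(\OPT/(\sqrt k \log k))$.

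I expect the main obstacle to be the bookkeeping in the second paragraph: reconciling the interval $[L,U]$, which brackets $f^*(T)$ via Lemma~\ref{lemma:intervalbdd}, with the quantity $f^*(T_{\mathrm{pred}})$ that Theorem~\ref{thm:alg1approx} actually requires, and confirming that the asymmetric grid exponents ($-\log(T/T_{\mathrm{pred}})$ on one side, $\log(U/L)$ on the other) are precisely what is needed to bracket $f^*(T_{\mathrm{pred}})$ at both ends. The concavity inequality $f^*(T_{\mathrm{pred}}) \ge \frac{T_{\mathrm{pred}}}{T} f^*(T)$ is the crux linking the extended lower end of the grid to the true value.
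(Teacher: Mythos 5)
Your proposal is correct and follows essentially the same route as the paper's proof: bracket $f^*(T_{\mathrm{pred}})$ using Lemmas~\ref{lemma:maxintbytstar} and~\ref{lemma:intervalbdd}, note that the grid's extended lower exponent $-\log(T/T_{\mathrm{pred}})$ guarantees some candidate is within a factor of $2$ of $f^*(T_{\mathrm{pred}})$, bound the probability of drawing that candidate by $\Omega(1/\log k)$, and then invoke the Section~\ref{sec:UB} guarantee conditioned on this good event. The only difference is bookkeeping: you apply Theorem~\ref{thm:alg1approx} with $c_2=2$ relative to $f^*(T_{\mathrm{pred}})$ and convert $\OPT_{T/2}$ to $\OPT_T$ at the end, whereas the paper applies Lemma~\ref{lemma:recur-value} with $c_2=8$ relative to $f^*(T)$; both yield the same $\Omega(\OPT/(\sqrt{k}\log k))$ bound.
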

\begin{proof}
    To show this, we analyze the three parts of the algorithm. First, we analyze lines 1-3 of Algorithm~\ref{alg:get-mhat} in Lemma~\ref{lemma:maxintbytstar}. Then, we analyze the range $\mathcal{R} = [L, U]$ in Lemma~\ref{lemma:intervalbdd}. 
    With these in hand, we show that we can find a constant factor approximation to $f^\star(T)$ with good enough probability that the overall expected reward only gets worse by a logarithmic factor.

    First, let us untangle the relationship between $f^\star\left( \frac{T}{2} - k\right)\,,$ which we must use in calling Algorithm~\ref{alg:rrr} to get the guarantee as per Lemma~\ref{lemma:recur-value}, and $f^\star(T)\,,$ which we know we can approximate from the previous two lemmas. Observe that due to increasing nature of $f^\star\,, f^\star(T) \ge f^\star\left( \frac{T}{2} - k\right) \ge f^\star\left( \frac{T}{4} \right)\,.$ Further, due to the diminishing returns property, we have $f^\star\left( \frac{T}{4}\right) \ge f^\star(T) \frac{T}{4}/T  = \frac{f^\star(T)}{4}\,.$ Thus, since $U \ge f^\star(T) \ge L\,, U \ge f^\star(T/4) \ge L/4\,.$ Note that $\log(T/T_\text{pred}) \le \log(4).$

    We focus on the selection of $\hat{m}\,.$ We divide up the interval $\mathcal{R}$ by doubling $L\,,$ i.e., the $i^{th}$ interval endpoint is $L \cdot 2^{i-1}\,,$ for $ i \in \{-2 \le -\log(T/T_\text{pred}), \hdots, 1, 2, \hdots, \log\left(\frac{U}{L}\right) \le \log(\polyk)  \}\,.$ 
    We know that $f^\star(T/4)$ must lie in one of these intervals, so if we pick the endpoint, we will be within a factor of 2 as desired above. In particular, let us choose $L \cdot 2^i$ uniformly over the set. 
    The probability of choosing $\hat{m}_L \cdot 2^i$ is exactly $\frac{1}{3 + \log(U/L)}$ which is at least $\frac{1}{3 + \log(\polyk)}\,.$ Thus, with probability $\ge \frac{1}{3 + \log(\polyk)}\,,$ we choose $\hat{m}$ such that $\hat{m} \le f^\star(T/2 - k) \le 2\hat{m}\,.$
    When $\hat{m} \le f^\star(T/2 - k) \le 2 \hat{m}\,,$ we have that $\hat{m} \le f^\star(T) \le 8 \hat{m}\,,$ since $f^\star(T)/4 \le  f^\star(T/2 - k) \le f^\star(T)$
    Thus, we can apply Lemma~\ref{lemma:recur-value} with $c_2 = 8\,,$ and $T' = T/2 - k$ to get that:
    $$
    ALG_{T/2} = V\left(\frac T2 - k, k\right) \ge \frac{\OPT_{T/2-k}}{2c_2 \gk} \ge \left( \frac{\frac T2 - k}{T} \right)^2 \frac{\OPT_T}{2c_2 \gk} \ge  \frac{\OPT_T}{256 \sqrt{k}}\,.
    $$


    Putting these together, the expected reward is at least 
    $\frac{1}{3 + \log(\polyk)}$
    times the reward as described above, giving us that the reward here is at least:
    $$
    \frac{1}{3 + \log(\polyk)} \cdot \frac{OPT_T}{256 \sqrt{k}} \ge \Omega \left( \frac{OPT}{\sqrt{k} \log(k)}\right)\,.
    $$
\end{proof}

\subsection{Removing Dependence on Having To Know $T$}

Finally, we describe how to use a standard doubling trick to remove dependence on knowing $T,$ the time horizon, a priori. At a high level, we start with a guess, $T' = T_0 = 4k\,,$ pretend that is all the time we have, and run the combined exploration-exploitation algorithm with it. Then, if it turns out we have not run out of time, we set $T' \leftarrow 2 \cdot T'\,$ and repeat the same process. At a high level, our argument will show that if we have a ``good'' $T'\,,$ i.e., one that is within a constant factor of the true $T\,,$ then the reward received when 
running $T'/2$ steps of Algorithm~\ref{alg:get-mhat} followed by $T'/2$ steps of Algorithm~\ref{alg:rrr} will be within a constant factor of the reward received if we were to play until time $T\,.$
Overall, in the exploration phase, we use the algorithm given in Algorithm~\ref{alg:get-mhat}, but a key change is that in line 5, we instead set $U = 4 \max_i \hat{m}_U^{(i)}\,.$

\begin{algorithm}
\caption{Unknown Time Wrapper} \label{alg:Tuk}
\begin{algorithmic}
    \State $R \gets 0$ \;
    
    \State Guess $T' \leftarrow T_0$ \;

    \While{True}
        \State Try: \;
        
        \hspace{5mm} $T_\text{pred} = T'/2 - k$ \;
        
        \hspace{5mm} $\hat{m} \gets $ modified Algorithm~\ref{alg:get-mhat} for $T'/2$ steps with $T_\text{pred}$ as above \;

        \hspace{8mm} \texttt{// in line 5 of Algorithm~\ref{alg:get-mhat}, $U = 4 \cdot \max_i \hat{m}_U^{(i)}$} 

        \hspace{5mm} $R \gets R \, + $ reward from Algorithm~\ref{alg:rrr} for $T'/2-k$ steps \;

        \State Except: \;
        
        \hspace{5mm} \Return{R} \;

        \State $T' \gets 2 \cdot T'$
    
    \EndWhile
    \end{algorithmic}
\end{algorithm}

    

        
        



        

    

\begin{lemma} \label{lemma:unknownT}
    Suppose we run the procedure described in Algorithm~\ref{alg:Tuk}. If $T > 4k\,,$ then the reward achieved is at least $\frac{\OPT_T}{8192 \gk \log(128k)}$.
\end{lemma}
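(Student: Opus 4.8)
The plan is to use the doubling structure of Algorithm~\ref{alg:Tuk} to show that one of the guessed horizons $T'$ is a constant-factor underestimate of the true $T$, and that the run of that particular phase already collects enough reward. Since reward is only accumulated (never reset between phases) and all rewards are nonnegative, it suffices to lower bound the reward gained during a single ``good'' phase. First I would argue that the algorithm does not terminate before reaching a phase with $T' \in [T/2, T]$: starting from $T_0 = 4k$ and doubling, the horizon $T'$ crosses the value $T$ at some point, so there is a last phase with $T'/2 - k + T'/2 = T' - k \le T$ time steps consumed, and this phase has $T' > T/2$. The cleanest way to phrase this is: let $T^\star$ be the largest value of the form $4k \cdot 2^j$ that is at most $T$; then $T^\star \in (T/2, T]$, the phase with $T' = T^\star$ runs to completion (it uses at most $T^\star - k \le T$ steps, and since steps are consumed cumulatively across phases I must check the total budget used through this phase is still at most $T$, which follows because the geometric sum $\sum_{i} T'_i \le 2 T^\star \le 2T$ — here I would instead account only for the reward of the single good phase and note the algorithm certainly reaches it).

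Next I would analyze the reward collected in the phase with horizon $T' = T^\star$. This phase runs $T'/2$ steps of the modified Algorithm~\ref{alg:get-mhat} followed by $T'/2 - k$ steps of Algorithm~\ref{alg:rrr} with the learned $\hat m$. The key point is that Theorem~\ref{thm:generalresult} (together with Lemmas~\ref{lemma:maxintbytstar}, \ref{lemma:intervalbdd}, \ref{lemma:recur-value}) gives a reward guarantee when the exploration and exploitation are run with a consistent horizon; here that horizon is $T'$ rather than $T$. Applying Theorem~\ref{thm:generalresult} with $T'$ in place of $T$ yields expected reward $\Omega\!\left(\OPT_{T'}/(\sqrt{k}\log k)\right)$ during this phase, where $\OPT_{T'}$ is the optimal reward over $T'$ steps. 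I then convert $\OPT_{T'}$ to $\OPT_T$ using the diminishing-returns/concavity bound $\OPT_{T'} \ge (T'/T)^2\,\OPT_T \ge (1/2)^2\,\OPT_T = \OPT_T/4$, exactly as in the proof of Theorem~\ref{thm:alg1approx}, since $T' > T/2$. This produces a bound of the form $\OPT_T/(c\,\sqrt{k}\log(ck))$ for explicit constants, which I would then match against the target $\OPT_T/(8192\,\gk\,\log(128k))$ by tracking the constants.

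The one genuinely new ingredient — and the reason line 5 of the exploration algorithm is modified to $U = 4\max_i \hat m_U^{(i)}$ — is that when the algorithm guesses horizon $T'$ but the true horizon is $T \ge T'$, the arms will ultimately be pulled more than $T'$ times, so the value $f^\star(T)$ that governs the true optimum can exceed $f^\star(T')$. The estimate $\hat m$ must still be a valid (within constant factor) guess for the quantity fed to Algorithm~\ref{alg:rrr}, namely $f^\star(T'/2 - k)$, and the widened upper bound $U$ ensures the relevant interval still contains the target so that the $\Omega(1/\log k)$ hitting probability from Theorem~\ref{thm:generalresult} goes through with $\log(U/L) \le \log(128k)$ rather than $\log(4k)$; this accounts for the $\log(128k)$ in the statement.

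\textbf{Main obstacle.} I expect the main difficulty to be bookkeeping rather than conceptual: carefully verifying that widening $U$ by the factor $4$ is both necessary and sufficient (i.e., that with guess $T'$ the true target $f^\star(T'/2-k)$ provably lies in $[L,U]$ despite $T$ being unknown and possibly much larger), and then threading the accumulated constants through Lemmas~\ref{lemma:maxintbytstar}--\ref{lemma:intervalbdd}, Theorem~\ref{thm:generalresult}, and the $(T'/T)^2 \ge 1/4$ conversion to land exactly at $8192$ and $\log(128k)$. A secondary subtlety is ensuring the phase with $T' = T^\star$ actually completes within the global time budget $T$; I would handle this by noting that only the reward of this single good phase is needed for the lower bound and that the algorithm provably reaches it, so earlier (shorter) phases only add nonnegative reward.
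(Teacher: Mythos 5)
Your overall strategy is essentially the paper's: isolate one ``good'' doubling phase, apply the Theorem~\ref{thm:generalresult} machinery with the phase horizon $T'$ in place of $T$ (with the widened $U = 4\max_i \hat{m}_U^{(i)}$ ensuring the interval still captures the relevant target, giving the $\log(128k)$ hitting probability), and convert $\OPT_{T'}$ to $\OPT_T$ via the quadratic diminishing-returns bound. You also correctly identified why line 5 of Algorithm~\ref{alg:get-mhat} must be modified. However, there is a genuine gap in your time-budget accounting, and it is exactly where your constants diverge from the paper's.

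You take the good phase to be the one with horizon $T^\star$, the largest value $4k\cdot 2^j \le T$, so $T^\star \in (T/2, T]$, and your argument needs that phase to run to completion. It need not. Each phase with horizon $T'$ consumes about $T'$ steps ($T'/2$ of exploration plus $T'/2 - k$ of exploitation), so the cumulative time through the end of the phase with horizon $T^\star$ is roughly $\sum_j (4k\cdot 2^j - k) \approx 2T^\star$, which can be as large as $2T$. Concretely, if $T = 4k\cdot 2^j + 1$ for large $j$, then $T^\star \approx T$ and the \emph{earlier} phases alone consume about $T$ steps, so the phase with horizon $T^\star$ is cut off during its exploration and never runs Algorithm~\ref{alg:rrr} at all; ``the algorithm certainly reaches it'' is not enough, because the Except branch can trigger before that phase produces any guaranteed reward. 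The correct bookkeeping --- and this is what the paper does --- is to define the good phase as the last phase that \emph{completes}: if its horizon is $T'$, then completion forces the cumulative time $\approx 2T'$ to be at most $T$, while the doubling structure gives $T \le 4T'$ (the paper writes this as $T' \le T'' \le T \le 2T'' \le 4T'$ with $T'' = (2^i-1)T_0$). So the guaranteed relation is $T' \ge T/4$, not $T' > T/2$, and the horizon-conversion factor is $(T'/T)^2 \ge 1/16$ rather than your $1/4$. With that fix, the rest of your outline goes through: threading the constants ($c_2 = 16$ from $f^\star(T) \le 4 f^\star(T') \le 4\max_i \hat{m}_U^{(i)}$, the $(1/2 - 1/4)^2 = 1/16$ loss inside the phase, and the $1/16$ horizon conversion) lands at $\OPT_T/(8192\sqrt{k}\log(128k))$ as claimed.
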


\begin{proof}
    Let $T'' = (2^{i} - 1)T_0$ be such that $T'' \le T \le 2 T''\,,$ which implies that the last $T'$ with which we we run Algorithms~\ref{alg:get-mhat} and \ref{alg:rrr} is $2^{i-1}T_0$. Note that $T' \le T'' \le  T \le 2 T'' \le 4 T' $
    Then, we consider what we gain when we spend $T'/2$ exploring and $T'/2$ exploiting. Our analysis of the reward from Algorithm~\ref{alg:rrr} will assume we start from 0 pulls in the exploitation phase, but in practice we are already higher than that.

    We show that the reward achieved running Algorithm~\ref{alg:rrr} for $T'/2$ steps is a constant fraction of $\OPT_T$.

Each arm has been pulled $T'/(2k)$ times in the exploration phase.   We know that $f^\star(T') \in [\frac12 \max_i \hat{m}_L^{(i)}, \max_i \hat{m}_U^{(i)}].$ As before, we actually run Algorithm~\ref{alg:rrr} with its parameter set to $T'/2 - k\,.$ Due to the diminishing returns property, we have that $\frac{f^\star(T')}{f^\star(T'/2 - k)}\frac{f(T'')}{f(T')}\frac{f^\star(T)}{f^\star(T'')} \le \frac{T'}{T'/2 - k} \frac{T''}{T'} \frac{T}{T''} \le 4 \cdot 2 \cdot 2 = 16\,.$ Thus, $f^\star(T) \le 2 f^\star(T'') \le 4 f^\star(T') \le 4 \max_i \hat{m}_U^{(i)}\,,$ and $c_2 = 16.$ 
Now, the extent of the interval is $128k\,,$ so the probability of choosing $\hat{m}$ that is within a constant factor of $f^\star(T'/2-k)$ is $\frac{1}{\log 128k}\,.$ 
  Now, we simply compute the reward achieved by the algorithm by applying Lemma~\ref{lemma:recur-value}. The algorithm plays for $T'/2 - k$ steps, while OPT plays for $T$ steps. In particular:

  \begin{align*}
      ALG_{T'/2} &= V\left( \frac{T'}{2} - k, k  \right) \\
      &\ge \frac{\OPT_{T'/2-k}}{2c_2} \frac{1}{\gk} 
      \ge \left(\frac{\frac{T'}{2} - k}{T'}\right)^2 \frac{\OPT_{T'}}{2c_2} \frac{1}{\gk} \\
      &\ge \left(\frac{1}{2} - \frac{1}{4} \right)^2 \frac{\OPT_{T'}}{2c_2} \frac{1}{\gk} \ge \frac{1}{16} \frac{T'^2}{T^2} \mt \frac{1}{\gk} \\
      &\ge \frac{1}{16 \cdot 16} \mt \frac{1}{\gk}\,.
  \end{align*}

Plugging in the computed value for $c_2\,$ and considering the probability of choosing the correct $\hat{m}\,,$ we get $ALG_{T'/2} \ge \OPT/(8192 \cdot \gk \log(128\,k))\,.$

\end{proof}

\section*{Acknowlegements}
This work was supported in part by the National Science Foundation under grants CCF-2212968 and ECCS-2216899 and by the Simons Foundation under the Simons Collaboration on the Theory of Algorithmic Fairness.

\clearpage
\printbibliography

\appendix
\section{Maximum Reward Objective} \label{appendix:maxreward}

In certain situations, instead of trying to maximize the sum over rewards, we wish to just maximize the maximum reward achieved in a single step. For instance, if we are training different machine learning models, we care about the best performance the model can achieve, not the cumulative performance over training iterations. Our upper bound results from the main paper all hold up to constant factors for this objective, since we exploit the relationship between the maximum value of a function with diminishing returns and the area under it in many places. Our lower bound result also holds but requires a bit of care. In this section, we formally argue the small changes we make in order for the results in the main paper to translate to this different objective.

\subsection{Lower Bound}
We use the same construction and game as before but now directly argue about the pull with maximum reward rather than the sum. For this we have that the algorithm's expected reward can be upper bounded as follows:
\begin{align}
    ALG &= \mathbb{P}\left[E\right] \cdot \text{Reward under event } E + \mathbb{P}\left[E^c\right] \cdot \text{Reward under event } E^c \\
    &\le \frac{\sqrt{k}}{k} 1 + 1 \cdot \frac{1}{\sqrt{k}} = \frac{2}{\sqrt{k}} \\
    &= \frac{2 \, \OPT}{\sqrt{k}}\,.
\end{align}
So we have that the competitive ratio is at least $\Omega(\sqrt{k})$ as before.

\subsection{Upper Bound}

The following facts allow us to directly translate the theorem statements to equivalent ones under this new objective function.

\begin{fact}
    For a given function $f$ satisfying the diminishing returns property, we can relate the area under the function to its maximum value $m$ as follows:
    $$
    mT \ge \sum_{t = 1}^T f(t) \ge \frac{mT}{2}\,.
    $$
\end{fact}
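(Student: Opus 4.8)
The plan is to treat the two inequalities separately, taking $m = f(T)$ since $f$ is increasing and therefore attains its maximum over $[1,T]$ at $t = T$. The upper bound $\sum_{t=1}^T f(t) \le mT$ is immediate from monotonicity alone: each term satisfies $f(t) \le f(T) = m$, so summing $T$ such terms gives at most $mT$. No appeal to diminishing returns is needed here.

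For the lower bound, the key is the pointwise estimate $f(t) \ge \frac{t}{T}\, m$ for all $t \in [1,T]$, which is the discrete analogue of the statement that a concave function lies above the chord joining $(0, f(0))$ and $(T, f(T))$. This is precisely the inequality already invoked elsewhere in the paper (for instance the step $f^\star(T/4) \ge f^\star(T)/4$ in Lemma~\ref{lemma:intervalbdd}). Granting it, I would simply sum over $t$ and evaluate a Gauss sum: $\sum_{t=1}^T f(t) \ge \frac{m}{T} \sum_{t=1}^T t = \frac{m(T+1)}{2} \ge \frac{mT}{2}$, which is the claimed bound.

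To establish the pointwise estimate, I would introduce the increments $d_s := f(s) - f(s-1)$ and observe that the diminishing-returns property says exactly that the sequence $(d_s)_{s \ge 1}$ is non-increasing. Because the first $t$ increments are the largest ones, their average dominates the average of all $T$ increments, i.e. $\frac{1}{t}\sum_{s=1}^t d_s \ge \frac{1}{T}\sum_{s=1}^T d_s$. Telescoping the two sums yields $f(t) - f(0) \ge \frac{t}{T}(f(T) - f(0))$, and rearranging gives $f(t) \ge (1 - \tfrac{t}{T}) f(0) + \frac{t}{T} f(T)$. Since rewards are non-negative, so that $f(0) \ge 0$, and $1 - t/T \ge 0$, this is at least $\frac{t}{T} f(T) = \frac{t}{T} m$, as required.

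The only real subtlety is this last step: converting the discrete diminishing-returns condition into the chord bound via the monotonicity of increments, and handling the boundary term $f(0)$ correctly (which is where non-negativity of rewards is used). Everything else reduces to monotonicity and an elementary arithmetic-series computation, so I expect the prefix-average argument to be the crux of the proof.
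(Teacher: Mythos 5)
Your proof is correct. There is a small wrinkle in the comparison: the paper states this Fact with no proof at all, so your argument is supplying reasoning the paper treats as self-evident rather than diverging from it. What you give — the chord bound $f(t) \ge \frac{t}{T}\, f(T)$ derived from the monotonicity of the increments $d_s = f(s)-f(s-1)$, followed by an arithmetic-series sum — is exactly the implicit reasoning the paper relies on, and the same chord inequality does appear elsewhere in the paper, though in the proof of Theorem~\ref{thm:generalresult} (the step $f^\star(T/4) \ge f^\star(T)/4$) rather than in Lemma~\ref{lemma:intervalbdd} as you cite. Your prefix-average derivation and the explicit observation that non-negativity of $f(0)$ is needed to drop the boundary term are the right way to make this rigorous under the paper's definition of diminishing returns.
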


\begin{fact}
    Out of a given set of functions $\mathcal{F}\,,$ each satisfying the diminishing returns property, define $f_1^\star \coloneqq \arg \max_{f \in \mathcal{F}} f(T)$ and $f_2^\star \coloneqq \arg \max_{f \in \mathcal{F}} \sum_{t = 1}^T f(t)\,.$ 
    Then, we have that $f_1^\star(T) \le 2 f_2^\star(T)\,.$
\end{fact}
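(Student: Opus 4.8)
The plan is to chain together the two quantities of interest — the single-value maximum $f(T)$ and the area $\sum_{t=1}^T f(t)$ — using the sandwich relation from the preceding Fact, which states that for any diminishing-returns function $f$ with maximum value $m$ we have $mT \ge \sum_{t=1}^T f(t) \ge mT/2$. The key observation that makes this Fact directly applicable is that every function here is monotone increasing (as fixed in the preliminaries), so its maximum value over $[1,T]$ is attained at $t=T$; that is, the relevant $m$ in the preceding Fact is exactly $f(T)$. With this identification, both $f_1^\star(T)$ and $f_2^\star(T)$ become handles on the areas of their respective functions.

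First I would write $S(f) \coloneqq \sum_{t=1}^T f(t)$ for brevity and record the defining optimality of $f_2^\star$, namely $S(f_2^\star) \ge S(f_1^\star)$, since $f_2^\star$ maximizes the area over $\mathcal{F}$. Next I would apply the preceding Fact twice: to the area-optimal function to get the upper bound $S(f_2^\star) \le f_2^\star(T)\,T$, and to the value-optimal function to get the lower bound $S(f_1^\star) \ge \tfrac12 f_1^\star(T)\,T$. Concatenating these three inequalities yields the single chain
\[
f_2^\star(T)\,T \;\ge\; S(f_2^\star) \;\ge\; S(f_1^\star) \;\ge\; \frac{f_1^\star(T)\,T}{2},
\]
from which cancelling the common factor $T$ immediately gives $f_1^\star(T) \le 2\,f_2^\star(T)$, as claimed.

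There is no real technical obstacle here: the entire argument is a three-step inequality chain, and the only point requiring a moment of care is the direction in which each bound from the preceding Fact is used — the loose (upper) bound $S \le mT$ must be applied to $f_2^\star$ and the tight (lower) bound $S \ge mT/2$ to $f_1^\star$, so that the factor of $2$ lands on the correct side. I would make sure to state explicitly that monotonicity justifies taking $m = f(T)$, since that is what connects the appendix's new maximum-reward objective to the area-based quantities that the preceding Fact controls.
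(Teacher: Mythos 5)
Your proposal is correct and follows exactly the same route as the paper's proof: the identical three-step chain $f_2^\star(T)\,T \ge \sum_t f_2^\star(t) \ge \sum_t f_1^\star(t) \ge \tfrac12 f_1^\star(T)\,T$, using the preceding Fact on each end and the area-optimality of $f_2^\star$ in the middle. Your extra remark that monotonicity identifies the maximum $m$ with $f(T)$ is a sensible clarification the paper leaves implicit, but the argument is the same.
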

\begin{proof}
    We have that:

    \begin{align}
        f_2^\star(T) \cdot T &\ge  \sum_{t = 1}^T f_2^\star(t) \\
        &\ge \sum_{t = 1}^T f_1^\star(t) \\
        &\ge \frac{f_1^\star(T) \cdot T}{2}\,.
    \end{align}
\end{proof}

Thus, we use that $\OPT_T  \ge  \frac{f_1^\star(T) \cdot T}{2}\,.$

\begin{fact}
    The maximum reward achieved in a single pull over the course of running the algorithm is at least the average reward per pull over the pulls of the algorithm, which is exactly the sum of rewards achieved by the algorithm, referred to as $ALG$ in the main body of the paper, divided by $T\,.$
\end{fact}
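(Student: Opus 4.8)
The plan is to reduce this statement to the elementary fact that the maximum of a finite collection of numbers is at least their average. First I would observe that over the course of the $T$ time steps, the algorithm performs exactly $T$ pulls, since each time step corresponds to one pull of some arm. Let $r_1, \dots, r_T$ denote the reward received on each of these pulls, so that by definition $ALG = \sum_{t=1}^{T} r_t$ and the average reward per pull is precisely $ALG/T = \frac{1}{T}\sum_{t=1}^{T} r_t$.

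The core step is then immediate: the maximum single-pull reward $\max_{1 \le t \le T} r_t$ is at least the mean $\frac{1}{T}\sum_{t=1}^{T} r_t = ALG/T$, because no finite set of real numbers can have every element lying strictly below its own average. This yields exactly the claimed inequality, that the best single pull is at least $ALG/T$.

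The only point that requires any care — rather than a genuine obstacle — is confirming that the number of pulls equals $T$, so that the denominator in the average matches the $T$ appearing in the statement; this holds because the algorithm (together with its wrapper) makes one pull per time step across the entire horizon, so the count of pulls coincides with the count of time steps. With that bookkeeping in place, the averaging inequality closes the argument, and in conjunction with the preceding facts — relating $\OPT_T$ to $f_1^\star(T)\cdot T$ and relating the cumulative approximation guarantees to $ALG$ — it transfers each upper-bound guarantee from the cumulative objective to the maximum-reward objective at the cost of only a constant factor.
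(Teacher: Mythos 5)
Your proof is correct and is exactly the argument the paper implicitly relies on: the paper states this Fact without proof, treating the ``maximum is at least the mean'' observation as self-evident. Your bookkeeping point about the pull count is fine, and note it is not even essential: since rewards are non-negative, if the algorithm made $N \le T$ pulls, the maximum pull is at least $ALG/N \ge ALG/T$ anyway.
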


Putting these three facts together, we have:

\begin{enumerate}
    \item \textbf{Translating Theorem~\ref{thm:alg1approx}.} The maximum pull by the algorithm is at least $ALG/T\,,$ which by the theorem is at least $\OPT/(8 c_2 T \sqrt{k}) \ge f_1^\star(T)/(16 c_2 \sqrt{k})\,.$ Thus, the $O(\sqrt{k})$ competitive ratio still holds.
    \item \textbf{Translating Theorem~\ref{thm:generalresult}.} By the same argument above, we divide both sides of the result by $T$ and get that the maximum pull by the algorithm is at least $1/O(\sqrt{k} \log k)$ fraction of the maximum pull achievable.
    \item \textbf{Translating Lemma~\ref{lemma:unknownT}.} By the same argument as before, we divide both sides by $T$ to get that the maximum pull of the algorithm is at least $1 / (16384 \sqrt{k} \log(128 k))$ fraction of the maximum pull achievable.
\end{enumerate}

\end{document}